\newtheorem{theorem}{Theorem}
\newtheorem{definition}{Definition}
\newtheorem{lemma}{Lemma}
\newcommand{\expect}[2][]{
    {\mathbb{E}_{#1}[\kern-0.15em[ #2 ]\kern-0.14em]}
    }
\newenvironment{proof}
{\par\vspace{0pt}\noindent\textbf{Proof.}\
\enspace\ignorespaces\begin{allowdisplaybreaks}}
{\end{allowdisplaybreaks}\hspace{\stretch{1}}$\square$\par}
\newenvironment{proof2}
{\par\vspace{0pt}\noindent\textbf{Proof of Theorem 2.}\
\enspace\ignorespaces\begin{allowdisplaybreaks}}
{\end{allowdisplaybreaks}\hspace{\stretch{1}}$\square$\par}
\newcommand{\UBoolean}{\texttt{UBoolean}}
\begin{document}
\begin{frontmatter}
\title{A Lower Bound Analysis of Population-based Evolutionary Algorithms for Pseudo-Boolean Functions}

\author{Chao Qian$^{1,2}$}
\author{Yang Yu$^2$}
\author{Zhi-Hua Zhou$^2$\corref{cor1}}
\address{$^1$UBRI, School of Computer Science and Technology,\\ University of Science and Technology of China, Hefei 230027, China\\
$^2$National Key Laboratory for Novel Software Technology,\\ Nanjing University, Nanjing 210023, China} \cortext[cor1]{\small Corresponding author.
Email: zhouzh@nju.edu.cn}

\begin{abstract}
Evolutionary algorithms (EAs) are population-based general-purpose optimization algorithms, and have been successfully applied in various real-world optimization tasks. However, previous theoretical studies often employ EAs with only a parent or offspring population and focus on specific problems. Furthermore, they often only show upper bounds on the running time, while lower bounds are also necessary to get a complete understanding of an algorithm. In this paper, we analyze the running time of the ($\mu$+$\lambda$)-EA (a general population-based EA with mutation only) on the class of pseudo-Boolean functions with a unique global optimum. By applying the recently proposed switch analysis approach, we prove the lower bound $\Omega(n \ln n+ \mu + \lambda n\ln\ln n/ \ln n)$ for the first time. Particularly on the two widely-studied problems, OneMax and LeadingOnes, the derived lower bound discloses that the ($\mu$+$\lambda$)-EA will be strictly slower than the (1+1)-EA when the population size $\mu$ or $\lambda$ is above a moderate order. Our results imply that the increase of population size, while usually desired in practice, bears the risk of increasing the lower bound of the running time and thus should be carefully considered.
\end{abstract}

\begin{keyword}
Evolutionary algorithms \sep population \sep running time analysis \sep lower bound
\end{keyword}
\end{frontmatter}

\section{Introduction}

Evolutionary algorithms (EAs)~\cite{back:96} are a kind of population-based heuristic optimization algorithm. They have been widely applied in industrial optimization problems. However, the theoretical analysis is difficult due to their complexity and randomness. In the recent decade, there has been a significant rise on the running time analysis (one essential theoretical aspect) of EAs~\cite{auger2011theory,neumann2010bioinspired}. For example, Droste et al.~\cite{droste2002analysis} proved that the expected running time of the (1+1)-EA on linear pseudo-Boolean functions is $\Theta(n \ln n)$; for the ($\mu$+1)-EA solving several artificially designed functions, a large parent population size $\mu$ was shown to be able to reduce the running time from exponential to polynomial~\cite{jansen2001utility,storch2008choice,witt2006runtime,witt2008population}; for the (1+$\lambda$)-EA solving linear functions, the expected running time was proved to be $O(n \ln n + \lambda n)$~\cite{doerr2015optimizing}, and a tighter bound up to lower order terms was derived on the specific linear function OneMax~\cite{giessen2015population}.

Previous running time analyses often consider EAs with only a parent or offspring population, which do not fully reflect the population-based nature of real EAs. When involving both parent and offspring populations, the running time analysis gets more complex, and only a few results have been reported on the ($\lambda$+$\lambda$)-EA (i.e., a specific version of the ($\mu$+$\lambda$)-EA with $\mu=\lambda$), which maintains $\lambda$ solutions and generates $\lambda$ offspring solutions by only mutation in each iteration. He and Yao~\cite{he2002individual} compared the expected running time of the (1+1)-EA and the ($\lambda$+$\lambda$)-EA on two specific artificial problems, and proved that the introduction of a population can reduce the running time exponentially. On the contrary side, Chen et al.~\cite{chen2011large} found that a large population size is harmful for the ($\lambda$+$\lambda$)-EA solving the TrapZeros problem. Chen et al.~\cite{chen2009new} also proved that the expected running time of the ($\lambda$+$\lambda$)-EA on the OneMax and LeadingOnes problems is $O(\lambda n\ln \ln n+n\ln n)$ and $O(\lambda n \ln n+n^2)$, respectively. Later, a low selection pressure was shown to be better for the ($\lambda$+$\lambda$)-EA solving a wide gap problem~\cite{chen2010choosing}, and a proper mutation-selection balance was proved to be necessary for the effectiveness of the ($\lambda$+$\lambda$)-EA solving the SelPres problem~\cite{lehre2012impact}.

The above-mentioned studies on the ($\lambda$+$\lambda$)-EA usually focus on specific test functions, while EAs are general purpose optimization algorithms and can be applied to all optimization problems where solutions can be represented and evaluated. Thus, it is necessary to analyze EAs over large problem classes. Meanwhile, most previous running time analyses on population-based EAs only show upper bounds. Although upper bounds are appealing for revealing the ability of an algorithm, lower bounds which reveal the limitation are also necessary for a complete understanding of the algorithm.

In this paper, we analyze the running time of the ($\mu$+$\lambda$)-EA solving the class of \emph{pseudo-Boolean functions with a unique global optimum}, named $\UBoolean$, which covers many P and NP-hard combinatorial problems. By applying the recently proposed approach switch analysis~\cite{yu2014switch}, we prove that the expected running time is lower bounded by $\Omega(n \ln n+\mu+\lambda n\ln\ln n/ \ln n)$. Particularly, when applying this lower bound to the two specific problems, OneMax and LeadingOnes, we can have a more complete understanding of the impact of the offspring population size $\lambda$. It was known that the ($\mu$+$\lambda$)-EA is always not asymptotically faster than the (1+1)-EA on these two problems~\cite{lehre2010black,sudholt2011general}. But it was left open that what is the range of $\lambda$ where the ($\mu$+$\lambda$)-EA is asymptotically worse than the (1+1)-EA. Note that the expected running time of the (1+1)-EA on OneMax and LeadingOnes is $\Theta(n \ln n)$ and $\Theta(n^2)$, respectively~\cite{droste2002analysis}. By comparing them with our derived lower bound, we easily get that the ($\mu$+$\lambda$)-EA is strictly asymptotically slower than the (1+1)-EA when $\lambda \in \omega(\frac{(\ln n)^2}{\ln\ln n})$ on OneMax and $\lambda\in \omega(\frac{n \ln n}{\ln \ln n})$ on LeadingOnes. For the parent population size $\mu$, we easily get obvious ranges $\omega(n \ln n)$ and $\omega(n^2)$ for the ($\mu$+$\lambda$)-EA being asymptotically worse on OneMax and LeadingOnes, respectively.


The rest of this paper is organized as follows. Section 2 introduces some preliminaries. Section 3 introduces the employed analysis approach. The running time analysis of the ($\mu$+$\lambda$)-EA on $\UBoolean$ is presented in Section 4. Section 5 concludes the paper.

\section{Preliminaries}

In this section, we first introduce the ($\mu$+$\lambda$)-EA and the pseudo-Boolean problem class studied in this paper, respectively, then describe how to model EAs as Markov chains.

\subsection{($\mu$+$\lambda$)-EA}

Evolutionary algorithms (EAs)~\cite{back:96} are used as general heuristic randomized optimization approaches. Starting from an initial set of solutions (called a population), EAs try to improve the population by a cycle of three stages: reproducing new solutions from the current population, evaluating the newly generated solutions, and updating the population by removing bad solutions. The ($\mu$+$\lambda$)-EA as described in Algorithm~\ref{(N+N)-EA} is a general population-based EA with mutation only for optimizing pseudo-Boolean problems over $\{0,1\}^n$. It maintains $\mu$ solutions. In each iteration, one solution selected from the current population is used to generate an offspring solution by bit-wise mutation (i.e., line~5); this process is repeated independently for $\lambda$ times; then $\mu$ solutions out of the parent and offspring solutions are selected to be the next population. Note that the selection strategies for reproducing new solutions and updating the population can be arbitrary. Thus, the considered ($\mu$+$\lambda$)-EA is quite general, and covers most population-based EAs with mutation only in previous theoretical analyses, e.g.,~\cite{chen2011large,he2002individual,lehre2012impact}.

\begin{algorithm}[t]\caption{($\mu$+$\lambda$)-EA}
    Given solution length $n$ and objective function $f$, let every population, denoted by variable $\xi$, contain $\mu$ solutions. It consists of the following steps:\label{(N+N)-EA}
    \begin{algorithmic}[1]
    \STATE let $t\leftarrow 0$, and $\xi_0 \leftarrow \mu$ solutions uniformly and randomly selected from $\{0,1\}^n$.
    \STATE \textbf{repeat until} some criterion is met
    \STATE \quad \textbf{for} $i=1$ to $\lambda$
    \STATE \qquad select a solution $s$ from $\xi_t$ according to some selection mechanism.
    \STATE \qquad create $s'_i$ by flipping each bit of $s$ with probability $1/n$.
    \STATE \quad \textbf{end for}
    \STATE \quad $\xi_{t+1} :=$ select $\mu$ solutions from $\xi_t \cup \{s'_1,\ldots,s'_{\lambda}\}$ according to some strategy.
    \STATE \quad let $t\leftarrow t+1$.
    \end{algorithmic}
\end{algorithm}

The running time of EAs is usually defined as the number of fitness evaluations until an optimal solution is found for the first time, since the fitness evaluation is the computational process with the highest cost of the algorithm \cite{he2001drift,yu2008new}. Note that running time analysis has been a leading theoretical aspect for randomized search heuristics~\cite{auger2011theory,neumann2010bioinspired}.

\subsection{Pseudo-Boolean Function Problems}

The pseudo-Boolean function class is a large function class which only requires the solution space to be $\{0,1\}^n$ and the objective space to be $\mathbb{R}$. It covers many typical P and NP-hard combinatorial problems such as minimum spanning tree and minimum set cover. We consider a subclass named $\UBoolean$ as shown in Definition \ref{def_UBoolean}, in which every function has a unique global optimum. Note that maximization is considered since minimizing $f$ is equivalent to maximizing $-f$. For any function in $\UBoolean$, we assume without loss of generality that the optimal solution is $11\ldots1$ (briefly denoted as $1^n$). This is because EAs treat the bits 0 and 1 symmetrically, and thus the 0-bits in an optimal solution can be interpreted as 1-bits without affecting the behavior of EAs. The expected running time of unbiased black-box algorithms and mutation-based EAs on $\UBoolean$ has been proved to be $\Omega(n \ln n)$~\cite{lehre2010black,sudholt2011general}.

\begin{definition}[$\UBoolean$]\label{def_UBoolean}
    A function $f:\{0,1\}^n \rightarrow \mathbb{R}$ in $\UBoolean$ satisfies that
    $$
        \exists s \in \{0,1\}^n, \forall s' \in \{0,1\}^n-\{s\}, f(s')<f(s).
    $$
\end{definition}

Diverse pseudo-Boolean problems in $\UBoolean$ have been used for analyzing the running time of EAs, and then to disclose properties of EAs. Here, we introduce the LeadingOnes problem, which will be used in this paper. As presented in Definition~\ref{def_leadingones}, it is to maximize the number of consecutive 1-bits starting from the left. It has been proved that the expected running time of the (1+1)-EA on LeadingOnes is $\Theta(n^2)$~\cite{droste2002analysis}.

\begin{definition}[LeadingOnes]\label{def_leadingones}
    LeadingOnes Problem of size $n$ is to find an $n$ bits binary string $s^*$ such that, letting $s_j$ be the $j$-th bit of a solution $s \in \{0,1\}^n$,
    $$
        s^*=\mathop{\arg\max}\nolimits_{s \in \{0,1\}^n} \left(f(s)=\sum\nolimits^{n}_{i=1} \prod\nolimits^{i}_{j=1} s_j\right).
    $$
\end{definition}

\subsection{Markov Chain Modeling}

EAs can be modeled and analyzed as Markov chains, e.g., in~\cite{he2001drift,yu2008new}. Let $\mathcal{X}$ be the population space and $\mathcal{X}^* \subseteq \mathcal{X}$ be the optimal population space. Note that an optimal population in $\mathcal{X}^*$ contains at least one optimal solution. Let $\xi_t \in \mathcal{X}$ be the population after $t$ generations. Then, an EA can be described as a random sequence $\{\xi_0,\xi_1,\xi_2,\ldots\}$. Since $\xi_{i+1}$ can often be decided from $\xi_{i}$ and the reproduction operator of the EA (i.e., $P(\xi_{i+1}\mid\xi_{i},\xi_{i-1},\ldots, \xi_0) = P(\xi_{i+1}\mid\xi_{i})$), the random sequence forms a Markov chain $\{\xi_t\}_{t=0}^{+\infty}$ with state space $\mathcal{X}$, denoted as ``$\xi\in \mathcal{X}$'' for simplicity. Note that all sets considered in this paper are multisets, e.g., a population can contain several copies of the same solution.

The goal of EAs is to reach the optimal space $\mathcal{X}^*$ from an initial population $\xi_0$. Given a Markov chain $\xi\in \mathcal{X}$ modeling an EA and $t_0 \geq 0$, we define $\tau$ as a random variable such that $\tau=\min\{t \geq 0 \mid \xi_{t_0+t} \in \mathcal{X}^*\}$. That is, $\tau$ is the number of steps needed to reach the optimal space for the first time when starting from time $t_0$. The mathematical expectation of $\tau$, $\expect{\tau \mid \xi_{t_0}=x}=\sum\nolimits^{\infty}_{i=0} i P(\tau=i)$, is called the \emph{conditional first hitting time} (CFHT) of the chain staring from $\xi_{t_0}=x$. If $\xi_{t_0}$ is drawn from a distribution $\pi_{t_0}$, the expectation of the CFHT over $\pi_{t_0}$, $\expect{\tau \mid \xi_{t_0}\sim \pi_{t_0} } = \sum\nolimits_{x\in \mathcal{X}} \pi_{t_0}(x)\expect{\tau \mid \xi_{t_0}=x}$, is called the \emph{distribution-CFHT} (DCFHT) of the chain from $\xi_{t_0} \sim \pi_{t_0}$. Since the running time of EAs is counted by the number of fitness evaluations, the cost of initialization and each generation should be considered. For example, the expected running time of the ($\mu$+$\lambda$)-EA is $\mu+\lambda \cdot \expect{\tau \mid \xi_0 \sim \pi_0}$.

A Markov chain $\xi \in \mathcal{X}$ is said to be absorbing, if $\forall t \geq 0: P(\xi_{t+1} \in \mathcal{X}^* \mid \xi_t \in \mathcal{X}^*) = 1$. Note that all Markov chains modeling EAs can be transformed to be absorbing by making it unchanged once an optimal state has been found. This transformation obviously does not affect its first hitting time.

\section{The Switch Analysis Approach}

To derive running time bounds of the ($\mu$+$\lambda$)-EA on $\UBoolean$, we first model the EA process as a Markov chain, and then apply the switch analysis approach.

Switch analysis~\cite{yu2014switch,yu2015switch} as presented in Theorem~\ref{them_main} is a recently proposed approach that compares the DCFHT of two Markov chains. Since the state spaces of the two chains may be different, an aligned mapping function $\phi:\mathcal{X}\to \mathcal{Y}$ as shown in Definition~\ref{def:optmap} is employed. Note that $\phi^{-1}(y)=\{x\in\mathcal{X}\mid \phi(x)=y\}$. Using switch analysis to derive running time bounds of a given chain $\xi \in \mathcal{X}$ (i.e., modeling a given EA running on a given problem), one needs to
\begin{enumerate}
  \item construct a reference chain $\xi' \in \mathcal{Y}$ for comparison and design an aligned mapping function $\phi$ from $\mathcal{X}$ to $\mathcal{Y}$;
  \item analyze their one-step transition probabilities, i.e., $P(\xi_{t+1} \mid \xi_{t})$ and $P(\xi'_{t+1} \mid \xi'_{t})$, the CFHT of the chain $\xi' \in \mathcal{Y}$, i.e., $\expect{\tau' \mid \xi'_{t}}$, and the state distribution of the chain $\xi \in \mathcal{X}$, i.e., $\pi_t$;
  \item examine Eq.~(\refeq{SA-condition}) to get the difference $\rho_t$ between each step of the two chains;
  \item sum up $\rho_t$ to get a running time gap $\rho$ of the two chains, and then bounds on $\expect{\tau \mid \xi_0}$ can be derived by combining $\expect{\tau' \mid \xi'_0}$ with $\rho$.
\end{enumerate}

\begin{definition}[Aligned Mapping~\cite{yu2014switch}]\label{def:optmap}
Given two spaces $\mathcal{X}$ and $\mathcal{Y}$ with target subspaces $\mathcal{X}^*$ and $\mathcal{Y}^*$, respectively, a function $\phi:\mathcal{X}\to \mathcal{Y}$ is called \\
(a) a \emph{left-aligned mapping} if $\ \ \forall x\in \mathcal{X}^*: \phi(x)\in \mathcal{Y}^*$;\\
(b) a \emph{right-aligned mapping} if $\ \ \forall x\in \mathcal{X}-\mathcal{X}^*:\phi(x)\notin \mathcal{Y}^*$;\\
(c) an \emph{optimal-aligned mapping} if it is both left-aligned and right-aligned.
\end{definition}

\begin{theorem}[Switch Analysis~\cite{yu2014switch}]\label{them_main}
        Given two absorbing Markov chains $\xi\in \mathcal{X}$ and $\xi'\in\mathcal{Y}$, let $\tau$ and $\tau'$ denote the hitting events of $\xi$ and $\xi'$, respectively, and let $\pi_t$ denote the distribution of $\xi_t$. Given a series of values $\{\rho_t\in\mathbb{R}\}_{t=0}^{+\infty}$ with $\rho= \sum_{t=0}^{+\infty} \rho_t$ and a right (or left)-aligned mapping $\phi: \mathcal{X} \rightarrow \mathcal{Y}$, if ~$\expect{\tau \mid \xi_{0} \sim \pi_0}$ is finite and
        \begin{equation}
        \begin{aligned}\label{SA-condition}
            & \forall t: \sum\limits_{x\in \mathcal{X}, y \in \mathcal{Y}} \pi_t(x) P(\xi_{t+1}\in\phi^{-1}(y) \mid \xi_t=x) \expect{\tau' \mid \xi'_{0} = y} \\
            & \leq(\text{or }\geq)  \sum\limits_{u,y\in \mathcal{Y}}  {\pi^\phi_t}(u) P(\xi'_{1}=y \mid \xi'_0=u) \expect{\tau' \mid \xi'_{1} =y}+\rho_t,
        \end{aligned}
        \end{equation}
         where $\pi^\phi_t(u)=\pi_t(\phi^{-1}(u))=\sum_{x\in \phi^{-1}(u)}\pi_t(x)$, we have
         $$
         \expect{\tau \mid \xi_{0} \sim \pi_0} \leq(\text{or }\geq) \expect{\tau' \mid \xi'_{0} \sim \pi^\phi_0} +\rho .
         $$
\end{theorem}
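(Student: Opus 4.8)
The plan is to recast both sides of the step condition~(\ref{SA-condition}) as increments of a single potential---the reference chain's DCFHT evaluated on the pushed-forward distribution $\pi^\phi_t$---and then telescope the per-step inequalities into the claimed global one. Two elementary facts about absorbing chains drive the argument. First, for any absorbing chain the DCFHT has the tail-sum form $\expect{\tau \mid \xi_0 \sim \pi_0} = \sum_{t=0}^{+\infty} \pi_t(\mathcal{X} - \mathcal{X}^*)$, since $\{\tau > t\} = \{\xi_t \notin \mathcal{X}^*\}$ once the chain is absorbing. Second, writing $g(u) = \expect{\tau' \mid \xi'_0 = u}$, the one-step decomposition of the hitting time yields the recurrence $\sum_{y} P(\xi'_1 = y \mid \xi'_0 = u)\, g(y) = g(u) - \mathbf{1}[u \notin \mathcal{Y}^*]$, because $g$ vanishes on $\mathcal{Y}^*$ and equals one plus its expected post-step value off $\mathcal{Y}^*$.

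First I would rewrite the left-hand side of~(\ref{SA-condition}). Since $P(\xi_{t+1} \in \phi^{-1}(y) \mid \xi_t = x) = P(\phi(\xi_{t+1}) = y \mid \xi_t = x)$, summing against $\pi_t$ collapses the inner transition into the law of $\phi(\xi_{t+1})$, so the left-hand side is exactly $\sum_y \pi^\phi_{t+1}(y)\, g(y) = \expect{\tau' \mid \xi'_0 \sim \pi^\phi_{t+1}}$. For the right-hand side I would invoke time-homogeneity to replace $\expect{\tau' \mid \xi'_1 = y}$ by $g(y)$, and then apply the recurrence above to the double sum over $u,y$; this rewrites its first term as $\expect{\tau' \mid \xi'_0 \sim \pi^\phi_t} - \pi^\phi_t(\mathcal{Y} - \mathcal{Y}^*)$. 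Hence the ``$\leq$'' instance of~(\ref{SA-condition}) is equivalent to the per-step estimate $\pi^\phi_t(\mathcal{Y} - \mathcal{Y}^*) \leq \expect{\tau' \mid \xi'_0 \sim \pi^\phi_t} - \expect{\tau' \mid \xi'_0 \sim \pi^\phi_{t+1}} + \rho_t$.

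Next I would sum this estimate over $t = 0, \ldots, T-1$. The reference-DCFHT differences telescope, leaving $\sum_{t=0}^{T-1} \pi^\phi_t(\mathcal{Y} - \mathcal{Y}^*) \leq \expect{\tau' \mid \xi'_0 \sim \pi^\phi_0} - \expect{\tau' \mid \xi'_0 \sim \pi^\phi_T} + \sum_{t=0}^{T-1}\rho_t$; letting $T \to \infty$ and discarding the nonnegative subtracted term gives $\sum_{t=0}^{+\infty}\pi^\phi_t(\mathcal{Y}-\mathcal{Y}^*) \leq \expect{\tau' \mid \xi'_0 \sim \pi^\phi_0} + \rho$. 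Finally I would combine the tail-sum form for $\xi$ with the aligned-mapping property: a right-aligned $\phi$ satisfies $\mathcal{X} - \mathcal{X}^* \subseteq \phi^{-1}(\mathcal{Y} - \mathcal{Y}^*)$, hence $\pi_t(\mathcal{X} - \mathcal{X}^*) \leq \pi^\phi_t(\mathcal{Y} - \mathcal{Y}^*)$ for every $t$. Chaining these inequalities delivers $\expect{\tau \mid \xi_0 \sim \pi_0} \leq \expect{\tau' \mid \xi'_0 \sim \pi^\phi_0} + \rho$. The ``$\geq$'' case is symmetric: the inequality in~(\ref{SA-condition}) reverses and a left-aligned $\phi$ gives the reverse containment $\phi^{-1}(\mathcal{Y}-\mathcal{Y}^*) \subseteq \mathcal{X} - \mathcal{X}^*$.

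The main obstacle is the boundary term $\expect{\tau' \mid \xi'_0 \sim \pi^\phi_T}$ left by telescoping. In the ``$\leq$'' case it is harmless, being nonnegative and subtracted. In the ``$\geq$'' case I must show it tends to $0$: since $\expect{\tau \mid \xi_0 \sim \pi_0}$ is finite, the absorbing chain $\xi$ concentrates on $\mathcal{X}^*$ as $T \to \infty$, and a left-aligned $\phi$ sends $\mathcal{X}^*$ into $\mathcal{Y}^*$ where $g$ vanishes, so $\pi^\phi_T$ concentrates on $\{g = 0\}$; genuine care is needed to pass to the limit when $g$ is unbounded. This finiteness hypothesis is precisely what guarantees that the tail sums converge and that the interchange of summation and limit is legitimate.
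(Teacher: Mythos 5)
The paper never proves Theorem~\ref{them_main}; it imports it verbatim from~\cite{yu2014switch}, so there is no in-paper proof to compare against, and your attempt has to be judged on its own. Judged so, it is correct in approach and is essentially a repackaging of the original ``hybrid chain'' telescoping argument: your per-step reformulation of Eq.~(\ref{SA-condition}) --- the left side collapses to $\expect{\tau' \mid \xi'_{0} \sim \pi^\phi_{t+1}}$ by the pushforward identity, the right side to $\expect{\tau' \mid \xi'_{0} \sim \pi^\phi_{t}} - \pi^\phi_t(\mathcal{Y}-\mathcal{Y}^*) + \rho_t$ by the one-step CFHT decomposition --- is exactly the comparison of the process that switches from $\xi$ to $\xi'$ at time $t+1$ against the one that switches at time $t$, and the alignment containments together with the tail-sum identity $\expect{\tau \mid \xi_0 \sim \pi_0} = \sum_{t}\pi_t(\mathcal{X}-\mathcal{X}^*)$ (which this paper itself invokes inside the proof of Theorem~\ref{N+N-EA_lower}) finish the ``$\leq$''/right-aligned case completely. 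One small simplification: you do not need time-homogeneity to replace $\expect{\tau' \mid \xi'_1 = y}$ by $\expect{\tau' \mid \xi'_0 = y}$; Lemma~\ref{lem_cfht} applied to $\xi'$ at time $0$ already yields $\sum_{y} P(\xi'_1 = y \mid \xi'_0 = u)\expect{\tau' \mid \xi'_1 = y} = \expect{\tau' \mid \xi'_0 = u} - \mathbf{1}[u \notin \mathcal{Y}^*]$ directly.

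The one place you stop short --- the boundary term $\expect{\tau' \mid \xi'_0 \sim \pi^\phi_T}$ in the ``$\geq$'' case --- is precisely the case the paper relies on in Theorem~\ref{N+N-EA_lower}, so it cannot be left as ``genuine care is needed.'' Your instinct that unboundedness of $g(y) = \expect{\tau' \mid \xi'_0 = y}$ is the obstruction is sound; in fact, on an infinite state space the theorem as literally stated can fail: let $\xi$ jump from any non-optimal state to the optimum with probability $1/2$ and otherwise to a state whose $g$-value is four times larger, so that the potential $\expect{\tau' \mid \xi'_0 \sim \pi^\phi_t}$ doubles each step while $\expect{\tau \mid \xi_0 \sim \pi_0} = 2$; the ``$\geq$'' version of Eq.~(\ref{SA-condition}) then tolerates $\rho_t = 1$ on an arbitrarily long initial segment, making the claimed conclusion false. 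The rescue is that switch analysis is stated and used for EAs on finite spaces --- here $\mathcal{Y} = \{0,1\}^n$ with $g(y) = n|y|_0 \leq n^2$ --- where $g$ is bounded, and then your own observation closes the argument: left-alignment gives $\pi^\phi_T(\mathcal{Y}-\mathcal{Y}^*) \leq \pi_T(\mathcal{X}-\mathcal{X}^*)$, which tends to $0$ because $\sum_T \pi_T(\mathcal{X}-\mathcal{X}^*) = \expect{\tau \mid \xi_0 \sim \pi_0} < \infty$, whence $\expect{\tau' \mid \xi'_0 \sim \pi^\phi_T} \leq \bigl(\max_y g(y)\bigr)\,\pi_T(\mathcal{X}-\mathcal{X}^*) \to 0$. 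Add the explicit hypothesis (finite $\mathcal{Y}$, or bounded reference CFHT) and this step, and your proof is complete.
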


The idea of switch analysis is to obtain the difference $\rho$ on the DCFHT of two chains by summing up all the one-step differences $\rho_t$. Using Theorem~\ref{them_main} to compare two chains, we can waive the long-term behavior of one chain, since Eq.~\eqref{SA-condition} does not involve the term $\expect{\tau \mid \xi_{t}}$. Therefore, the theorem can simplify the analysis of an EA process by comparing it with an easy-to-analyze one.

\section{Running Time Analysis}

In this section, we prove a lower bound on the expected running time of the ($\mu$+$\lambda$)-EA solving $\UBoolean$, as shown in Theorem~\ref{N+N-EA_lower}. Our proof is accomplished by using switch analysis (i.e., Theorem~\ref{them_main}). The target EA process we are to analyze is the ($\mu$+$\lambda$)-EA running on any function in $\UBoolean$. The constructed reference process for comparison is the RLS$^{\neq}$ algorithm running on the LeadingOnes problem. RLS$^{\neq}$ is a modification of the randomized local search algorithm. It maintains only one solution $s$. In each iteration, a new solution $s'$ is generated by flipping a randomly chosen bit of $s$, and $s'$ is accepted only if $f(s')> f(s)$. That is, RLS$^{\neq}$ searches locally and only accepts a better offspring solution.

\begin{theorem}\label{N+N-EA_lower}
The expected running time of the ($\mu$+$\lambda$)-EA on $\UBoolean$ is $\Omega(n \ln n + \mu+ \lambda n \ln\ln n/\ln n)$, when $\mu$ and $\lambda$ are upper bounded by a polynomial in $n$.
\end{theorem}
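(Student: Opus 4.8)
The plan is to obtain the three summands from three different sources and then combine them, using that for lower bounds $\Omega(A)+\Omega(B)=\Omega(A+B)$ (the maximum is within a factor two of the sum). Writing the expected running time as $\mu+\lambda\cdot\expect{\tau\mid\xi_0\sim\pi_0}$, where $\tau$ counts generations, the $\mu$ term is immediate from the cost of evaluating the initial population, and the $n\ln n$ term is inherited from the known $\Omega(n\ln n)$ lower bound for mutation-based and unbiased black-box algorithms on $\UBoolean$, of which the ($\mu$+$\lambda$)-EA is an instance. The entire problem therefore reduces to proving that the expected number of generations satisfies $\expect{\tau\mid\xi_0\sim\pi_0}=\Omega(n\ln\ln n/\ln n)$; multiplying by the per-generation cost $\lambda$ then yields the third term.

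To lower-bound the number of generations I would invoke switch analysis (Theorem~\ref{them_main}) in the ``$\geq$'' direction, with the target chain being the ($\mu$+$\lambda$)-EA on an arbitrary $f\in\UBoolean$ (optimum $1^n$ without loss of generality) and the reference chain being RLS$^{\neq}$ on LeadingOnes. The point of this reference is that its conditional first hitting time is known exactly and is \emph{linear} in the progress made: from a state with $i$ leading ones, each improving step flips one specific bit with probability $1/n$ and advances the level by exactly one, so the CFHT is $g(i)=\Theta\bigl((n-i)\,n\bigr)$, of total order $\Theta(n^2)$. I would take the aligned mapping $\phi$ to send a population to the reference state whose level equals $n$ minus the minimum number of $0$-bits over the population (the smallest Hamming distance to $1^n$ present in it), making $\phi$ optimal-aligned and $g\circ\phi$ a potential that is linear in the population's distance to the optimum. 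Since a uniformly random initial population has minimum $0$-count $\Theta(n)$ with high probability even for $\mu\le\mathrm{poly}(n)$ (a Hoeffding bound on each of the $\mu$ strings followed by a union bound), the reference starts near level $n/2$ and $\expect{\tau'\mid\pi^\phi_0}=\Theta(n^2)$.

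The crux is a single-generation progress lemma: in one generation the minimum $0$-count of the population can drop by at most the \emph{maximum number of bits flipped among the $\lambda$ mutations}, and this maximum has expectation $O(\ln n/\ln\ln n)$ whenever $\lambda$ is polynomially bounded. Indeed, an offspring's $0$-count is at least its parent's $0$-count minus the number of bits it flips, and a parent's $0$-count is at least the current population minimum; a single bit-wise mutation flips at least $b$ bits with probability at most $\binom{n}{b}n^{-b}\le 1/b!$, so a union bound over the $\lambda$ offspring together with the choice $b^\ast=\Theta(\ln n/\ln\ln n)$ (for which $b^\ast!\gg\lambda$) controls the tail and hence the expectation. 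This bound is independent of $f$ and of the selection rules, which is exactly what is required since both are arbitrary in Algorithm~\ref{(N+N)-EA}. Feeding it into Eq.~\eqref{SA-condition}, the target advances the linear potential $g\circ\phi$ by at most $n\cdot O(\ln n/\ln\ln n)$ per generation, which lets me pick the per-step gaps $\rho_t$ and verify the switch-analysis condition \emph{without} reasoning about the long-term dynamics of the target. Summing and combining with $\expect{\tau'}=\Theta(n^2)$ gives $\expect{\tau}=\Omega\bigl(n^2/(n\ln n/\ln\ln n)\bigr)=\Omega(n\ln\ln n/\ln n)$; equivalently, since $g\circ\phi$ is linear this is just additive drift on the distance-to-optimum with drift bound $O(\ln n/\ln\ln n)$, which is why a linear-CFHT reference such as LeadingOnes, rather than OneMax whose CFHT is concave, is the right choice.

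I expect the main obstacle to be turning the single-generation progress lemma into a choice of $\rho_t$ that provably satisfies Eq.~\eqref{SA-condition} for every $t$: one must push the target's one-step transition through $\phi$, re-express it via the reference CFHT $g$, and show the resulting inequality holds pointwise while keeping $\sum_t\rho_t$ large. The linearity of $g$ in the progress measure is what makes this tractable, but care is needed because selection may discard improving offspring and because ``maximum bits flipped'' bounds a worst-case single step rather than a clean martingale increment. A secondary obstacle is that both the initialization estimate $\expect{m(\xi_0)}=\Theta(n)$ and the progress lemma must be made uniform over $\mu,\lambda\le\mathrm{poly}(n)$, which is precisely where the polynomial restriction in the statement is used essentially.
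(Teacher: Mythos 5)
Your proposal is correct and follows essentially the same route as the paper: switch analysis with RLS$^{\neq}$ on LeadingOnes as the reference chain, the population mapped through its best (minimum $0$-count) member, an initialization argument showing $\expect{\tau' \mid \xi'_0 \sim \pi^\phi_0}=\Omega(n^2)$, and the known $\Omega(n\ln n)$ bound for mutation-based EAs supplying the first term. Your single-generation progress lemma (expected maximum number of flipped bits among $\lambda$ mutations is $O(\ln n/\ln\ln n)$ for polynomial $\lambda$) is exactly the content of the paper's Lemma~\ref{compro2}, just phrased as an expected maximum instead of a sum of powered binomial tail probabilities, and your Hoeffding-plus-union-bound initialization replaces the paper's exact computation with Lemma~\ref{compro3} to the same effect.
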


We first give some lemmas that will be used in the proof of Theorem~\ref{N+N-EA_lower}. Lemma~\ref{lem_cfht} characterizes the one-step transition behavior of a Markov chain via CFHT. Lemma~\ref{RLS_LeadingOnes} gives the CFHT $\expect{\tau'\mid \xi'_t=y}$ of the reference chain $\xi'$ (i.e., RLS$^{\neq}$ running on LeadingOnes). In the following analysis, we will use $\mathbb{E}_{rls}(j)$ to denote $\expect{\tau'\mid \xi'_t=y}$ with $|y|_0=j$, i.e., $\mathbb{E}_{rls}(j)=nj$.

\begin{lemma}[\cite{freidlin1996markov}]\label{lem_cfht}
        Given a Markov chain $\xi \in \mathcal{X}$ and a target subspace $\mathcal{X}^*\subset \mathcal{X}$, we have, for CFHT,\;\; $\forall x \in \mathcal{X}^*: \expect{\tau \mid \xi_t=x}=0$,
        \begin{align*}
        &\forall x\notin \mathcal{X}^*:\expect{\tau \mid \xi_t=x}=1+\sum\nolimits_{x'\in \mathcal{X}} P(\xi_{t+1}=x' \mid \xi_t=x)\expect{\tau\mid \xi_{t+1}=x'}.
        \end{align*}
\end{lemma}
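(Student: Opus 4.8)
The plan is to prove this by \emph{first-step analysis}: I would condition on the state reached after one transition and invoke the Markov property to re-express the remaining hitting time. Throughout, write $\tau^{(s)} = \min\{k \geq 0 \mid \xi_{s+k} \in \mathcal{X}^*\}$ for the hitting time counted from time $s$, so that $\expect{\tau \mid \xi_t = x}$ and $\expect{\tau \mid \xi_{t+1} = x'}$ in the statement are $\expect{\tau^{(t)} \mid \xi_t = x}$ and $\expect{\tau^{(t+1)} \mid \xi_{t+1} = x'}$, respectively. The base case is immediate: if $x \in \mathcal{X}^*$ then $\xi_{t+0} = x \in \mathcal{X}^*$, so the minimizing index is $k=0$ and $\tau^{(t)} = 0$ with probability one, giving $\expect{\tau \mid \xi_t = x} = 0$.

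For the recursive case, fix $x \notin \mathcal{X}^*$. First I would observe the pathwise identity $\tau^{(t)} = 1 + \tau^{(t+1)}$ on the event $\{\xi_t = x\}$: since $x \notin \mathcal{X}^*$ excludes the index $k=0$, the earliest index $\geq 0$ at which $\xi_{t+\cdot}$ enters $\mathcal{X}^*$ is exactly one more than the earliest index $\geq 0$ at which $\xi_{(t+1)+\cdot}$ enters $\mathcal{X}^*$. Next I would apply the law of total expectation, partitioning over the value $x'$ of $\xi_{t+1}$:
\begin{align*}
\expect{\tau \mid \xi_t = x} = \sum_{x' \in \mathcal{X}} P(\xi_{t+1} = x' \mid \xi_t = x)\, \expect{\tau^{(t)} \mid \xi_t = x, \xi_{t+1} = x'}.
\end{align*}

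Substituting the identity $\tau^{(t)} = 1 + \tau^{(t+1)}$ turns each inner expectation into $1 + \expect{\tau^{(t+1)} \mid \xi_t = x, \xi_{t+1} = x'}$, and here the Markov property does the essential work: conditioned on $\xi_{t+1} = x'$, the future $(\xi_{t+2}, \xi_{t+3}, \ldots)$ — and hence $\tau^{(t+1)}$ — is independent of $\xi_t$, so the conditioning on $\xi_t = x$ may be dropped, yielding $\expect{\tau^{(t+1)} \mid \xi_t = x, \xi_{t+1} = x'} = \expect{\tau \mid \xi_{t+1} = x'}$. Inserting this and collapsing the constant term via $\sum_{x'} P(\xi_{t+1} = x' \mid \xi_t = x) = 1$ produces exactly the claimed recursion.

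I expect no deep obstacle; the care needed is entirely in the bookkeeping. Since $\tau$ is a nonnegative integer-valued random variable, every term above is nonnegative, so the law of total expectation and the rearrangement are justified in $[0, +\infty]$ by Tonelli's theorem, and the identity remains valid even when $\expect{\tau \mid \xi_t = x} = +\infty$. I would also be careful to invoke only the Markov property and not time-homogeneity, since the statement is phrased for a general time index $t$, and to keep the two starting times $t$ and $t+1$ distinct when manipulating $\tau^{(t)}$ versus $\tau^{(t+1)}$.
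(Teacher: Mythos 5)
Your proof is correct. Note that the paper does not prove this lemma at all---it is quoted as a known result with a citation to the Markov chain literature---so there is no in-paper argument to compare against; your first-step analysis (the pathwise identity $\tau^{(t)} = 1 + \tau^{(t+1)}$ on $\{\xi_t = x\}$ for $x \notin \mathcal{X}^*$, the Markov property to drop the conditioning on $\xi_t$, and Tonelli to justify the manipulations in $[0,+\infty]$) is exactly the standard derivation such a reference would give, and your care in avoiding time-homogeneity and in handling possibly infinite expectations is appropriate.
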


\begin{lemma}[\cite{yu2014switch}]\label{RLS_LeadingOnes}
For the chain $\xi' \in \mathcal{Y}$ modeling RLS$^{\neq}$ running on the LeadingOnes problem, the CFHT satisfies that $\forall y \in \mathcal{Y}=\{0,1\}^n: \expect{\tau'\mid \xi'_t=y}=n\cdot |y|_0$, where $|y|_0$ denotes the number of 0-bits of $y$.
\end{lemma}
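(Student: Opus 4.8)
The plan is to solve the first-hitting-time recurrence of Lemma~\ref{lem_cfht} directly, exploiting the fact that RLS$^{\neq}$ on LeadingOnes admits a single improving move at every non-optimal state. First I would fix a non-optimal $y \in \{0,1\}^n$ and write it as $y = 1^k 0\, b_{k+2}\cdots b_n$, where $k$ is the current LeadingOnes value, so that position $k+1$ is the first $0$-bit. Flipping any of the $k$ leading $1$-bits strictly decreases the LeadingOnes value, and flipping any bit at a position $>k+1$ leaves the prefix $1^k 0$ intact and hence leaves $f$ unchanged; in both cases $f(s')\not>f(s)$, the move is rejected, and the chain stays at $y$. The only strict improvement arises from flipping the first $0$-bit at position $k+1$, an event of probability exactly $1/n$ (since a single uniformly chosen bit is flipped).

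The crucial observation is that, although flipping position $k+1$ may raise the LeadingOnes value by more than one (by revealing a run of $1$-bits inside $b_{k+2}\cdots b_n$), it changes exactly one coordinate from $0$ to $1$, so the resulting state $y'$ satisfies $|y'|_0 = |y|_0 - 1$. Substituting this one-step transition into Lemma~\ref{lem_cfht} gives, for every non-optimal $y$,
$$
\expect{\tau' \mid \xi'_t = y} = 1 + \tfrac{1}{n}\,\expect{\tau' \mid \xi'_{t+1} = y'} + \tfrac{n-1}{n}\,\expect{\tau' \mid \xi'_{t+1} = y}.
$$
Using time-homogeneity of the chain ($\expect{\tau' \mid \xi'_{t+1} = y} = \expect{\tau' \mid \xi'_t = y}$) and the finiteness of the CFHT (guaranteed since each step succeeds with probability $1/n>0$, so optimality is reached in finite expected time), this simplifies to $\expect{\tau' \mid \xi'_t = y} = n + \expect{\tau' \mid \xi'_t = y'}$. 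An induction on $|y|_0$, with base case $\expect{\tau' \mid \xi'_t = 1^n}=0$ from the first clause of Lemma~\ref{lem_cfht}, then yields $\expect{\tau' \mid \xi'_t = y} = n\cdot|y|_0$. Equivalently, one can argue probabilistically: the state is unchanged on every rejected step, so reaching $1^n$ from $y$ requires exactly $|y|_0$ accepted flips, each an independent Geometric$(1/n)$ waiting time of mean $n$, and linearity of expectation gives $n\,|y|_0$ directly.

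The only real subtlety, and the step I would flag, is the tension between the positional nature of LeadingOnes and the claim that the CFHT depends only on the count $|y|_0$. Because the improving jumps are irregular on the fitness scale, it is not \emph{a priori} clear that $\expect{\tau' \mid \xi'_t = y}$ is a function of $|y|_0$ alone rather than of the full arrangement of $0$-bits. The resolution, which I would make explicit, is that every accepted move removes exactly one $0$-bit regardless of how far the fitness jumps; hence the number of accepted moves to optimality is deterministically $|y|_0$ and the per-move waiting time is always $n$, making both the recurrence and the induction collapse cleanly to $n\,|y|_0$.
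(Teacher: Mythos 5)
Your proof is correct. Note that the paper itself gives no proof of this lemma---it is imported verbatim from the cited switch-analysis reference \cite{yu2014switch}---and your argument is exactly the standard one used there: under the strict-improvement rule of RLS$^{\neq}$, the only accepted move from a non-optimal state is flipping the leftmost $0$-bit (flips in the prefix strictly decrease $f$ and flips after the first $0$-bit leave $f$ unchanged, so both are rejected), hence each accepted move removes exactly one $0$-bit at geometric cost of mean $n$, giving $n\cdot|y|_0$ by induction or by linearity of expectation. Your flagged subtlety---that the CFHT depends only on $|y|_0$ despite the positional fitness---is indeed the right point to make explicit, and your resolution of it is sound.
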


\begin{lemma}\label{compro1}
For $m \geq i \geq 0$, $\sum^i_{k=0} \binom{m}{k}(\frac{1}{n})^k(1-\frac{1}{n})^{m-k}$ decreases with $m$.
\end{lemma}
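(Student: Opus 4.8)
The plan is to view the sum as the left tail of a binomial distribution and establish monotonicity through a single-step comparison. Writing $p = 1/n$ and $q = 1-1/n$, set $g(m) = \sum_{k=0}^{i} \binom{m}{k} p^{k} q^{m-k}$; this is precisely $P(X_m \le i)$ for $X_m \sim \mathrm{Binomial}(m,p)$. Probabilistically the claim is intuitive, since coupling $X_{m+1} = X_m + Y$ with an independent Bernoulli$(p)$ variable $Y$ forces $X_{m+1} \ge X_m$ and hence $P(X_{m+1}\le i) \le P(X_m \le i)$. Rather than formalize the coupling, I would prove the stronger exact identity by a direct algebraic route, which keeps the argument self-contained; it suffices to show $g(m+1) \le g(m)$ for every $m \ge i$.

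First I would apply Pascal's rule $\binom{m+1}{k} = \binom{m}{k} + \binom{m}{k-1}$ (with the convention $\binom{m}{-1}=0$) to each coefficient in $g(m+1)$ and split the sum into two parts. Factoring $q$ out of the first part reproduces $q\,g(m)$. In the second part I would substitute $j = k-1$ and factor out $p$, turning it into $p \sum_{j=0}^{i-1} \binom{m}{j} p^{j} q^{m-j} = p\bigl(g(m) - \binom{m}{i} p^{i} q^{m-i}\bigr)$. Adding the two parts and using $p+q=1$ yields
$$g(m+1) = g(m) - \binom{m}{i}\, p^{i+1} q^{m-i}.$$

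Because $\binom{m}{i} p^{i+1} q^{m-i} \ge 0$ for all $m \ge i$, this identity immediately gives $g(m+1)\le g(m)$, so $g$ is nonincreasing and the lemma follows. The computation is short, so the only real care needed is the index bookkeeping at the boundaries: keeping the $k=0$ term and the $\binom{m}{-1}=0$ convention straight when splitting, and checking that the reindexed sum terminates at $j=i-1$ so that exactly the term $\binom{m}{i}p^{i}q^{m-i}$ is subtracted off when rewriting it via $g(m)$. Verifying the base case $m=i$, where $g(i)=(p+q)^i=1$ and $g(i+1)=1-p^{i+1}$, offers a quick sanity check that the telescoping term has the right form.
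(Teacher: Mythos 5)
Your proof is correct, but it takes a genuinely different route from the paper's. The paper argues probabilistically: it writes $f(m)=\sum^i_{k=0} \binom{m}{k}(\frac{1}{n})^k(1-\frac{1}{n})^{m-k}$ as $P(\sum_{j=1}^m X_j \le i)$ for i.i.d.\ Bernoulli$(1/n)$ variables $X_j$, and then decomposes the event for $m+1$ trials as $P(\sum_{j=1}^{m} X_j < i) + P(\sum_{j=1}^m X_j = i)\,(1-\frac{1}{n})$, which is at most $f(m)$ --- essentially a formalization of the coupling idea you mention and then set aside. You instead proceed algebraically via Pascal's rule and derive the exact identity $g(m+1) = g(m) - \binom{m}{i}p^{i+1}q^{m-i}$, from which monotonicity is immediate; your index bookkeeping (the $\binom{m}{-1}=0$ convention, the reindexed sum ending at $j=i-1$) and your base-case check at $m=i$ are all sound. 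The two arguments encode the same combinatorial fact --- your telescoping term $\binom{m}{i}p^{i+1}q^{m-i}$ equals the paper's implicit decrement $P(\sum_{j=1}^m X_j = i)\cdot P(X_{m+1}=1)$ --- but yours makes the decrement explicit as a closed form, which is marginally stronger information, while the paper's conditioning argument is shorter and avoids index manipulation entirely. Either proof suffices for the role the lemma plays in Theorem 2.
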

\begin{proof}
Let $f(m)=\sum^i_{k=0} \binom{m}{k}(\frac{1}{n})^k(1-\frac{1}{n})^{m-k}$. The goal is to show that $f(m+1) \leq f(m)$ for $m \geq i$. Denote $X_1,\ldots,X_{m+1}$ as independent random variables, where $X_j$ satisfies that $P(X_j=1)=\frac{1}{n}$ and $P(X_j=0)=1-\frac{1}{n}$. Then we can express $f(m)$ and $f(m+1)$ as $f(m)=P(\sum\nolimits^m_{j=1} X_j \leq i)$ and $f(m+1)=P(\sum\nolimits^{m+1}_{j=1} X_j \leq i)$. Thus,
\begin{align*}
& f(m+1)=P(\sum\nolimits^m_{j=1} X_j < i)+ P(\sum\nolimits^m_{j=1} X_j = i)P(X_{m+1}=0)\\
& =P(\sum\nolimits^m_{j=1} X_j < i)+ P(\sum\nolimits^m_{j=1} X_j = i)(1-\frac{1}{n}) \leq f(m).
\end{align*}
\end{proof}

\begin{lemma}\label{compro2}
For $\lambda \leq n^c$ where $c$ is a positive constant, it holds that
$$
\sum^{n-1}_{i=0} \left(\sum^i_{k=0} \binom{n}{k} (\frac{1}{n})^k(1-\frac{1}{n})^{n-k}\right)^{\lambda} \geq n-\left\lceil \frac{e(c+1)\ln n}{\ln \ln n} \right\rceil.
$$
\end{lemma}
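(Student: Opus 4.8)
The plan is to rephrase the inner sum probabilistically and then control the deficit of the outer sum below $n$. Write $B \sim \mathrm{Bin}(n,1/n)$ for the number of bits flipped by one mutation, and set $p_i = \sum_{k=0}^{i}\binom{n}{k}(\frac1n)^k(1-\frac1n)^{n-k} = \Pr(B \le i)$, so that the target is equivalent to bounding the deficit
$$ n - \sum_{i=0}^{n-1} p_i^{\lambda} = \sum_{i=0}^{n-1}\bigl(1 - p_i^{\lambda}\bigr) \le \Bigl\lceil \tfrac{e(c+1)\ln n}{\ln\ln n}\Bigr\rceil =: i^*. $$
If $i^* \ge n$ the right-hand side of the lemma is nonpositive and the claim is trivial, so I may assume $i^* < n$, i.e.\ that $n$ is large.

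First I would record two elementary facts. Since $\lambda$ is a positive integer and $q_i := 1-p_i = \Pr(B \ge i+1) \in [0,1]$, Bernoulli's inequality gives $1 - p_i^{\lambda} = 1 - (1-q_i)^{\lambda} \le \lambda q_i$. And because $\binom{n}{k}(1/n)^k = \frac{n(n-1)\cdots(n-k+1)}{k!\,n^k} \le \frac{1}{k!}$, each point mass satisfies $\Pr(B=k) \le 1/k!$, whence $q_i \le \sum_{k\ge i+1} 1/k! \le \frac{2}{(i+1)!}$ for every $i \ge 0$. (This is where one may, if preferred, invoke the monotonicity of Lemma~\ref{compro1}, but the crude bound above already suffices.)

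Next I would split the deficit at the threshold $T = i^*-1$, bounding the first $T$ terms trivially by $1$ each and the tail by the estimates above:
$$ \sum_{i=0}^{n-1}\bigl(1-p_i^{\lambda}\bigr) \le T + \lambda\sum_{i \ge T} q_i \le (i^*-1) + 2\lambda\sum_{m \ge T+1}\frac{1}{m!} \le (i^*-1) + \frac{4\lambda}{i^*!}. $$
It then suffices to show $4\lambda/i^*! \le 1$, i.e.\ $i^*! \ge 4\lambda$, and since $\lambda \le n^c$ this reduces to $i^*! \ge 4n^c$.

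The crux, and the main obstacle, is this factorial lower bound; this is where the particular constants in the statement are consumed. I would apply the Stirling-type inequality $\ln(i^*!) \ge i^*\ln i^* - i^*$. Using $i^* \ge \frac{e(c+1)\ln n}{\ln\ln n}$ together with $\ln i^* \ge \ln\ln n - \ln\ln\ln n + O(1) = (1-o(1))\ln\ln n$, the leading term becomes $i^*\ln i^* \ge (1-o(1))\,e(c+1)\ln n$, while the subtracted $i^* = O(\ln n/\ln\ln n) = o(\ln n)$ is negligible. Hence $\ln(i^*!) \ge (1-o(1))\,e(c+1)\ln n$, which for large $n$ exceeds $c\ln n + \ln 4$ precisely because $e(c+1) > c$. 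This strict gap is exactly what the factors $e$ and $(c+1)$ buy us, and matching the chosen order of $i^*$ against $\lambda \le n^c$ is the delicate step. Combining the two displays yields deficit $\le i^*$, which is the claim.
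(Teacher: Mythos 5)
Your proposal is correct and takes essentially the same approach as the paper's proof: split the sum at the threshold $i^*=\lceil e(c+1)\ln n/\ln\ln n\rceil$, bound the head terms trivially, and show the tail contributes a deficit of at most $1$ by combining Bernoulli's inequality with a super-exponentially decaying bound on the binomial upper tail, with the constant $e(c+1)$ consumed in exactly the same logarithmic computation (the paper checks $m(\ln m-1)\ge(c+1)\ln n$, you check $\ln(i^*!)\ge c\ln n+\ln 4$ via Stirling). The only differences are technical bookkeeping: you bound the tail by the elementary estimate $P(B=k)\le 1/k!$ and sum the convergent factorial series, so you only need $i^*!\ge 4\lambda$, whereas the paper uses the Chernoff bound $P(X\ge i)\le e^{i-1}/i^i$ together with a uniform per-term deficit $1-(1-\frac{1}{\lambda n})^\lambda\le\frac1n$ over the $n$ tail terms, which requires the marginally stronger condition $m^m/e^{m-1}\ge\lambda n$; both reductions succeed for the same reason and at the same threshold order.
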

\begin{proof}
Let $m=\lceil \frac{e(c+1)\ln n}{\ln \ln n} \rceil$. Denote $X_1,...,X_n$ as independent random variables, where $P(X_j=1)=\frac{1}{n}$ and $P(X_j=0)=1-\frac{1}{n}$. Let $X=\sum^n_{j=1} X_j$, then its expectation $\expect{X}=1$. We thus have
\begin{equation}
\begin{aligned}\label{Chernoff}
\forall i \geq m, &\sum^n_{k=i} \binom{n}{k} (\frac{1}{n})^k(1-\frac{1}{n})^{n-k}=P(X \geq i) \leq e^{(i-1)}/i^{i},
\end{aligned}
\end{equation}
where the inequality is by Chernoff bound.
Then, we have
\begin{align*}
 & \sum^{n-1}_{i=0} (\sum^i_{k=0} \binom{n}{k} (\frac{1}{n})^k(1-\frac{1}{n})^{n-k})^{\lambda}\geq \sum^{n-1}_{i=m-1} (\sum^i_{k=0} \binom{n}{k} (\frac{1}{n})^k(1-\frac{1}{n})^{n-k})^{\lambda}\\
 &= \sum^{n-1}_{i=m-1} (1-\sum^n_{k=i+1} \binom{n}{k} (\frac{1}{n})^k(1-\frac{1}{n})^{n-k})^{\lambda}\\
 & \geq \sum^{n-1}_{i=m-1} (1-e^{i}/(i+1)^{(i+1)})^{\lambda}\geq \sum^{n-1}_{i=m-1} (1-e^{(m-1)}/m^m)^{\lambda},
\end{align*}
where the second inequality is by Eq.~(\refeq{Chernoff}), and the last inequality can be easily derived because $e^i/(i+1)^{(i+1)}$ decreases with $i$ when $i \geq m-1$.

Then, we evaluate $e^{m-1}/m^{m}$ by taking logarithm to its reciprocal.
\begin{align*}
&\ln(m^m/e^{m-1})=m(\ln m -1)+1\\
&\geq \frac{e(c+1)\ln n}{\ln \ln n}(1+\ln (c+1)+\ln\ln n - \ln\ln\ln n-1)+1\\
& \geq \frac{e(c+1)\ln n}{\ln \ln n} \frac{1}{e} \ln\ln n=(c+1)\ln n\geq \ln \lambda n. \quad (\text{by $\lambda \leq n^c$})
\end{align*}
This implies that $e^{m-1}/m^{m} \leq \frac{1}{\lambda n}$. Thus, we have
\begin{align*}
& \sum^{n-1}_{i=0} (\sum^i_{k=0} \binom{n}{k} (\frac{1}{n})^k(1-\frac{1}{n})^{n-k})^{\lambda} \geq \sum^{n-1}_{i=m-1} (1-\frac{1}{\lambda n})^{\lambda}\\
&\geq \sum^{n-1}_{i=m-1} (1-\frac{1}{n}) \geq n-m =n-\lceil \frac{e(c+1)\ln n}{\ln \ln n}\rceil,
\end{align*}
where the second inequality is by $\forall 0 \leq xy \leq 1, y \geq 1:(1-x)^y \geq  1-xy$.
\end{proof}

\begin{lemma}[\cite{flum2006}]\label{compro3} Let $H(\epsilon)=-\epsilon \log \epsilon -(1-\epsilon) \log(1-\epsilon)$. It holds that
$$
\forall n \geq 1, 0<\epsilon<\frac{1}{2}: \sum^{\lfloor \epsilon n \rfloor}_{k=0} \binom{n}{k} \leq 2^{H(\epsilon)n}.
$$
\end{lemma}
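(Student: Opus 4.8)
The plan is to derive this entropy bound directly from the binomial theorem combined with a single monotonicity observation that uses the hypothesis $\epsilon < 1/2$. First I would start from the identity $1 = (\epsilon + (1-\epsilon))^n = \sum_{k=0}^{n} \binom{n}{k}\epsilon^k (1-\epsilon)^{n-k}$, and then discard all terms with index above $\lfloor \epsilon n \rfloor$. Since every term is nonnegative, this only decreases the sum, giving $1 \geq \sum_{k=0}^{\lfloor \epsilon n\rfloor}\binom{n}{k}\epsilon^k(1-\epsilon)^{n-k}$. The strategy is to turn the left-hand binomial sum we care about into the factor $\sum_{k=0}^{\lfloor \epsilon n\rfloor}\binom{n}{k}$ by pulling a common weight out of this partial sum.

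Next I would establish the key monotonicity step. Writing the weight as $\epsilon^k(1-\epsilon)^{n-k}=(1-\epsilon)^n(\epsilon/(1-\epsilon))^k$ and using that $\epsilon<1/2$ forces $\epsilon/(1-\epsilon)<1$, this expression is decreasing in $k$. Hence for every integer $k$ in the retained range, which all satisfy $k\leq \lfloor \epsilon n\rfloor \leq \epsilon n$, we have $\epsilon^k(1-\epsilon)^{n-k}\geq \epsilon^{\epsilon n}(1-\epsilon)^{(1-\epsilon)n}$. Factoring this common lower bound out of the partial sum yields $1 \geq \epsilon^{\epsilon n}(1-\epsilon)^{(1-\epsilon)n}\sum_{k=0}^{\lfloor \epsilon n\rfloor}\binom{n}{k}$.

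Finally I would rearrange to isolate the binomial sum, obtaining $\sum_{k=0}^{\lfloor \epsilon n\rfloor}\binom{n}{k}\leq \epsilon^{-\epsilon n}(1-\epsilon)^{-(1-\epsilon)n}$, and rewrite the right-hand side as a power of two. With $\log$ taken to base $2$ (matching the binary entropy $H$), $\epsilon^{-\epsilon n}(1-\epsilon)^{-(1-\epsilon)n}=2^{(-\epsilon\log\epsilon -(1-\epsilon)\log(1-\epsilon))n}=2^{H(\epsilon)n}$, which is exactly the claimed inequality.

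The computation is short, so the only place demanding care is the monotonicity step: I must check that the inequality points in the direction that lower-bounds each retained weight, so that factoring it out correctly weakens the sum, and this hinges precisely on $\epsilon<1/2$ giving $\epsilon/(1-\epsilon)<1$. I would also verify that the floor satisfies $\lfloor \epsilon n\rfloor\leq \epsilon n$, so that the comparison value $\epsilon^{\epsilon n}(1-\epsilon)^{(1-\epsilon)n}$ with the real exponent $\epsilon n$ genuinely lower-bounds every integer-index weight in the range, and that the logarithm base used in $H$ is consistent with the base used when converting to $2^{(\cdot)}$.
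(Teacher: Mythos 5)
Your proof is correct. The paper itself gives no proof of this lemma---it is imported verbatim from the cited reference \cite{flum2006}---so there is no in-paper argument to compare against; your derivation (binomial theorem applied to $(\epsilon+(1-\epsilon))^n$, discarding the tail $k>\lfloor\epsilon n\rfloor$, and lower-bounding each retained weight $\epsilon^k(1-\epsilon)^{n-k}$ by $\epsilon^{\epsilon n}(1-\epsilon)^{(1-\epsilon)n}$, which is valid precisely because $\epsilon/(1-\epsilon)<1$ when $\epsilon<\tfrac12$) is the standard textbook proof of this entropy bound, and every step checks out, including the base-2 logarithm convention required to rewrite $\epsilon^{-\epsilon n}(1-\epsilon)^{-(1-\epsilon)n}$ as $2^{H(\epsilon)n}$.
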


\begin{proof2}
We use switch analysis (i.e., Theorem~\ref{them_main}) to prove it. Let $\xi \in \mathcal{X}$ model the analyzed EA process (i.e., the ($\mu$+$\lambda$)-EA running on any function in $\UBoolean$). We use RLS$^{\neq}$ running on the LeadingOnes problem as the reference process modeled by $\xi' \in \mathcal{Y}$. Then, $\mathcal{Y}=\{0,1\}^n$, $\mathcal{X}=\{\{y_1,y_2,\ldots,y_{\mu}\}\mid y_i \in \{0,1\}^n\}$, $\mathcal{Y}^*=\{1^n\}$ and $\mathcal{X}^*=\{x \in \mathcal{X} \mid \max_{y \in x}|y|_1=n\}$, where $|y|_1$ denotes the number of 1-bits of a solution $y \in \{0,1\}^n$. We construct a mapping $\phi: \mathcal{X} \rightarrow \mathcal{Y}$ as that $\forall x \in \mathcal{X}: \phi(x)= \arg \max_{y \in x} |y|_1$. It is easy to see that the mapping is an optimality-aligned mapping, because $\phi(x) \in \mathcal{Y}^*$ iff $x \in \mathcal{X}^*$.

We investigate the condition Eq.~(\ref{SA-condition}) of switch analysis. For any $x \notin \mathcal{X}^*$, suppose that $\min \{|y|_0 \mid y \in x\}=j>0$. Then, $|\phi(x)|_0=j$. By Lemma~\ref{lem_cfht} and~\ref{RLS_LeadingOnes}, we have
\begin{equation}
\begin{aligned}\label{onestep-nonoptimal9}
&\sum\nolimits_{y\in \mathcal{Y}}P(\xi'_1=y \mid \xi'_0=\phi(x)) \expect{\tau' \mid\xi'_1=y}=\mathbb{E}_{rls}(j)-1=nj-1.
\end{aligned}
\end{equation}
For the reproduction of the ($\mu$+$\lambda$)-EA (i.e., the chain $\xi \in \mathcal{X}$) on the population $x$, assume that the $\lambda$ selected solutions from $x$ for reproduction have the number of 0-bits $j_1,j_2,...,j_{\lambda}$, respectively, where $j \leq j_1 \leq j_2 \leq ... \leq j_{\lambda} \leq n$. If there are at most $i\;(0 \leq i \leq j_1)$ number of 0-bits mutating to 1-bits for each selected solution and there exists at least one selected solution which flips exactly $i$ number of 0-bits, which happens with probability $\prod^{\lambda}_{p=1}(\sum^i_{k=0} \binom{j_p}{k} (\frac{1}{n})^k(1-\frac{1}{n})^{j_p-k})-\prod^{\lambda}_{p=1}(\sum^{i-1}_{k=0} \binom{j_p}{k} (\frac{1}{n})^k(1-\frac{1}{n})^{j_p-k})$ (denoted by $p(i)$), the next population $x'$ satisfies that $|\phi(x')|_0 \geq j_1-i$. Furthermore, $\mathbb{E}_{rls}(i)=ni$ increases with $i$. Thus, we have
\begin{equation}
\begin{aligned}\label{onestep-nonoptimal10}
&\sum\limits_{y \in \mathcal{Y}} P(\xi_{t+1} \in \phi^{-1}(y) \mid \xi_t=x) \expect{\tau' \mid \xi'_0 = y} \geq \sum^{j_1}_{i=0} p(i) \cdot \mathbb{E}_{rls}(j_1-i)\\ &
\geq \sum^{j}_{i=0} p(i) \cdot \mathbb{E}_{rls}(j-i)=n\sum^{j-1}_{i=0} (\prod^{\lambda}_{p=1}(\sum^i_{k=0} \binom{j_p}{k} (\frac{1}{n})^k(1-\frac{1}{n})^{j_p-k})).
\end{aligned}
\end{equation}
By comparing Eq.~(\refeq{onestep-nonoptimal9}) with Eq.~(\refeq{onestep-nonoptimal10}), we have $\forall x \notin \mathcal{X}^*$,
\begin{align*}
&\sum\limits_{y \in \mathcal{Y}} P(\xi_{t+1}\!\in\! \phi^{-1}(y) \mid \xi_t\!=\!x) \expect{\tau' | \xi'_0 \!=\! y}- \sum\limits_{y\in \mathcal{Y}}P(\xi'_1\!=\!y \mid \xi'_0\!=\!\phi(x)) \expect{\tau' |\xi'_1\!=\!y}\\
& \geq n(\sum^{j-1}_{i=0}(\prod^{\lambda}_{p=1}(\sum^i_{k=0} \binom{j_p}{k} (\frac{1}{n})^k(1-\frac{1}{n})^{j_p-k}))-j)+1\\
&\geq n(\sum^{j-1}_{i=0}(\sum^i_{k=0} \binom{n}{k} (\frac{1}{n})^k(1-\frac{1}{n})^{n-k})^{\lambda}-j)+1\\
&\geq n(\sum^{n-1}_{i=0} (\sum^i_{k=0} \binom{n}{k} (\frac{1}{n})^k(1-\frac{1}{n})^{n-k})^{\lambda}-n)+1,
\end{align*}
where the 2nd `$\geq$' is because from Lemma \ref{compro1}, $\sum^i_{k=0} \binom{m}{k} (\frac{1}{n})^k(1-\frac{1}{n})^{m-k}$ reaches the minimum when $m=n$, and the last `$\geq$' is by $(\sum^i_{k=0} \binom{n}{k} (\frac{1}{n})^k(1-\frac{1}{n})^{n-k})^{\lambda} \leq 1$.\\
When $x \in \mathcal{X}^*$, both Eq.~(\ref{onestep-nonoptimal9}) and Eq.~(\ref{onestep-nonoptimal10}) equal 0, because both chains are absorbing and the mapping $\phi$ is optimality-aligned. Thus, Eq.~(\ref{SA-condition}) in Theorem~\ref{them_main} holds with $\rho_t=(n(\sum^{n-1}_{i=0} (\sum^i_{k=0} \binom{n}{k} (\frac{1}{n})^k(1-\frac{1}{n})^{n-k})^{\lambda}-n)+1)(1-\pi_t(\mathcal{X}^*))$. By switch analysis,
\begin{align*}
\expect{\tau | \xi_{0} \sim \pi_0} \geq& \expect{\tau' | \xi'_{0} \sim \pi^{\phi}_0}\\
&+(n(\sum^{n-1}_{i=0} (\sum^i_{k=0} \binom{n}{k} (\frac{1}{n})^k(1-\frac{1}{n})^{n-k})^{\lambda}-n)+1) \sum^{+\infty}_{t=0} (1-\pi_t(\mathcal{X}^*)).
\end{align*}
Since $\sum^{+\infty}_{t=0} (1-\pi_t(\mathcal{X}^*))=\expect{\tau | \xi_{0} \sim \pi_0}$, we have
\begin{equation}
\begin{aligned}\label{lower-bound}
&\expect{\tau | \xi_{0} \sim \pi_0} \geq \frac{\expect{\tau' | \xi'_{0} \sim \pi^{\phi}_0}}{n(n-\sum\limits^{n-1}_{i=0} (\sum\limits^i_{k=0} \binom{n}{k} (\frac{1}{n})^k(1-\frac{1}{n})^{n-k})^{\lambda})} \geq \frac{\expect{\tau' | \xi'_{0} \sim \pi^{\phi}_0}}{n\lceil \frac{e(c+1)\ln n}{\ln \ln n} \rceil},
\end{aligned}
\end{equation}
where the last inequality is by Lemma~\ref{compro2}, since $\lambda \leq n^c$ for some constant $c$.

We then investigate $\expect{\tau' | \xi'_{0} \sim \pi^{\phi}_0}$. Since each of the $\mu$ solutions in the initial population is selected uniformly and randomly from $\{0,1\}^n$, we have
\begin{align*}\label{initial_dist}
\forall 0 \leq j \leq n:\;& \pi^{\phi}_0(\{y \in \mathcal{Y}\mid |y|_0=j\})=\pi_0(\{x \in \mathcal{X} \mid \min_{y \in x} |y|_0=j\})\\
&=\frac{(\sum^n_{k=j} \binom{n}{k})^{\mu}-(\sum^n_{k=j+1} \binom{n}{k})^{\mu}}{2^{n\mu}},
\end{align*}
where $\sum^n_{k=j} \binom{n}{k}$ is the number of solutions with not less than $j$ number of 0-bits. Then,
\begin{align*}
&\expect{\tau' | \xi'_{0} \sim \pi^{\phi}_0}=\sum\nolimits^n_{j=0} \pi^{\phi}_0(\{y \in \mathcal{Y}\mid |y|_0=j\}) \mathbb{E}_{rls}(j)\\
&=\frac{1}{2^{n\mu}} \sum^n_{j=1}((\sum^n_{k=j} \binom{n}{k})^{\mu}-(\sum^n_{k=j+1} \binom{n}{k})^{\mu})nj =\frac{n}{2^{n\mu}} \sum^n_{j=1}(\sum^n_{k=j} \binom{n}{k})^{\mu}\\
&> n\sum^{\lfloor \frac{n}{4} \rfloor+1}_{j=1} (\sum^n_{k=j} \binom{n}{k}/2^n)^{\mu}> \frac{n^2}{4}(\sum^n_{k=\lfloor \frac{n}{4} \rfloor+1} \binom{n}{k}/2^n)^{\mu}=\frac{n^2}{4}(1-\sum^{\lfloor \frac{n}{4} \rfloor}_{k=0} \binom{n}{k}/2^n)^{\mu}\\
& \geq \frac{n^2}{4}(1-2^{H(\frac{1}{4})n-n})^{\mu} \geq \frac{n^2}{4} e^{-\frac{\mu}{2^{(1-H(\frac{1}{4}))n}-1}}>\frac{n^2}{4}e^{-\frac{\mu}{1.13^n-1}},
\end{align*}
where the third inequality is by Lemma \ref{compro3}, the fourth inequality is by $\forall 0< x<1: (1-x)^y \geq e^{-\frac{xy}{1-x}}$, and the last inequality is by $2^{1-H(\frac{1}{4})} > 1.13$.

Applying the above lower bound on $\expect{\tau' | \xi'_{0} \sim \pi^{\phi}_0}$ to Eq.~(\refeq{lower-bound}), we get, noting that $\mu$ is upper bounded by a polynomial in $n$,
$$
\expect{\tau | \xi_{0} \sim \pi_0} \geq \frac{n}{4\lceil \frac{e(c+1)\ln n}{\ln \ln n} \rceil}e^{-\frac{\mu}{1.13^n-1}},\quad \text{i.e.,}\quad \Omega(\frac{n\ln\ln n}{\ln n}).
$$
Considering the $\mu$ number of fitness evaluations for the initial population and the $\lambda$ number of fitness evaluations in each generation, the expected running time of the ($\mu$+$\lambda$)-EA on $\UBoolean$ is lower bounded by $\Omega(\mu+\frac{\lambda n\ln\ln n}{\ln n})$. Because the ($\mu$+$\lambda$)-EA belongs to mutation-based EAs, we can also directly use the general lower bound $\Omega(n\ln n)$~\cite{sudholt2011general}. Thus, the theorem holds.
\end{proof2}


\section{Conclusion}

This paper analyzes the expected running time of the ($\mu$+$\lambda$)-EA for solving a general problem class consisting of pseudo-Boolean functions with a unique global optimum. We derive the lower bound $\Omega(n \ln n + \mu + \lambda n \ln\ln n/\ln n)$ by applying the recently proposed approach switch analysis. The results partially complete the running time comparison between the ($\mu$+$\lambda$)-EA and the (1+1)-EA on the two well-studied pseudo-Boolean problems, OneMax and LeadingOnes. We can now conclude that when $\mu$ or $\lambda$ is slightly large, the ($\mu$+$\lambda$)-EA has a worse expected running time. The investigated ($\mu$+$\lambda$)-EA only uses mutation, while crossover is a characterizing feature of EAs. Therefore, we will try to analyze the running time of population-based EAs with crossover operators in the future.

%

\bibliography{population}\bibliographystyle{alpha}
\end{document}